\renewcommand\vec{\mathbf}
\begin{document}
\mainmatter             
\title{Multi-Object Grasping in the Plane} 
\titlerunning{Multi-Object Grasping in the Plane} 

\institute{School of Computing, University of Leeds,  UK.\\
\and
The AUTOLab at UC Berkeley (automation.berkeley.edu), USA.}

\author{Wisdom C. Agboh\inst{1,2} \and Jeffrey Ichnowski\inst{2} \and Ken Goldberg\inst{2} \and Mehmet R. Dogar\inst{1}}
\authorrunning{Agboh et al.}
\maketitle  

\begin{abstract}
We consider a novel problem where multiple rigid convex polygonal objects rest in randomly placed positions and orientations on a planar surface visible from an overhead camera. The objective is to efficiently grasp and transport all objects into a bin using multi-object push-grasps, where multiple objects are pushed together to facilitate multi-object grasping. We provide necessary conditions for frictionless multi-object push-grasps and apply these to filter inadmissible grasps in a novel multi-object grasp planner. We find that our planner is 19 times faster than a Mujoco simulator baseline. We also propose a picking algorithm that uses both single- and multi-object grasps to pick objects. In physical grasping experiments comparing performance with a single-object picking baseline, we find that the frictionless multi-object grasping system achieves 13.6\% higher grasp success and is 59.9\% faster, from 212 PPH to 340 PPH. See \url{https://sites.google.com/view/multi-object-grasping} for videos and code. 

\end{abstract}
\begin{figure}[b]
\centering
	   \begin{subfigure}[b]{0.325\textwidth}
	 \centering 
		\includegraphics[scale=0.2]{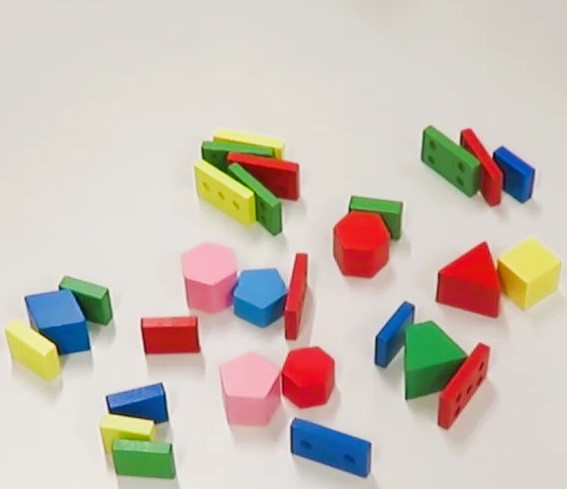}
	\end{subfigure}
		\begin{subfigure}[b]{0.325\textwidth}
	 \centering 
		\includegraphics[scale=0.2]{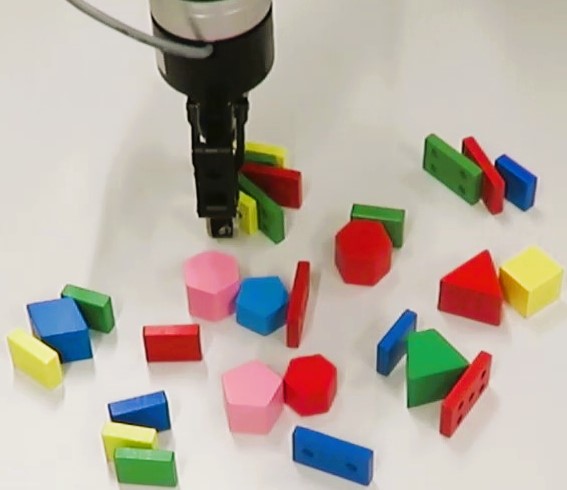}
	\end{subfigure}
	   \begin{subfigure}[b]{0.325\textwidth}
	 \centering 
		\includegraphics[scale=0.2]{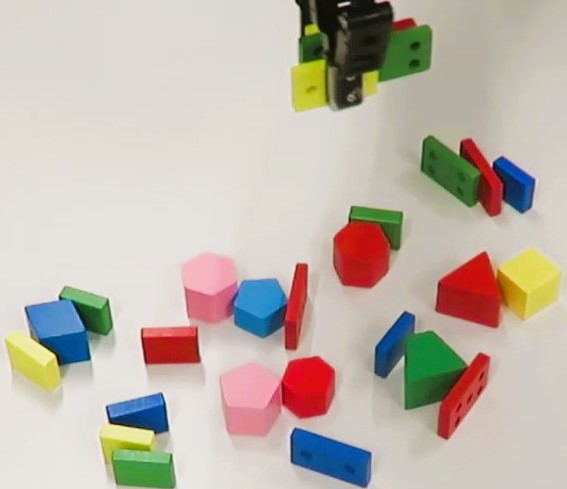}
	\end{subfigure}
	\caption{Robot picks five objects at once with a multi-object push-grasp.}
	\label{fig:mog_intro}
\end{figure}
\section{Introduction}
\label{sec:introduction}
\vspace{-2mm}
There has been a recent increase in demand for fast and efficient robot picking systems, especially for warehouses \cite{Danielczuk-CASE-2018, mahler2017binpicking, Ryo-IROS-2019, Huang-IROS-2021}. State-of-the-art robot grasping systems use only single-object grasps. As robot motion is increasingly becoming the bottleneck of pick-and-place systems~\cite{GOMP,DJ-GOMP}, reducing the number of arm motions with multi-object grasps has the potential to increase speed significantly. Furthermore, in cluttered scenes like Fig.~\ref{fig:mog_intro}, some of the objects may not have the necessary free space around them for the gripper to pick them individually in contrast to multi-object grasps. 

Prior-work \cite{Yamada-ROBIO-2009,  Yamada-ICMA-2012, Harada-IROS-1998} studied multi-object grasps for the case where objects are already in contact and ready to be grasped. However, objects are oftentimes apart and must be pushed/squeezed together before a multi-object grasp is possible. We call these \emph{multi-object push-grasps}. They require physics-based predictions of how objects will move when the grippers close, e.g. to predict whether the objects will remain inside the grasp or whether they will slide out.

One approach to synthesizing such multi-object push-grasps is to divide it into two steps. \citet{Sakamoto-IROS-2021} proposed a system to pick two cuboids at once. Their approach pushes and aligns the objects first before grasping them. Such a two-step approach may not be feasible in cluttered scenes with more than two objects where object access is limited. 

In this work, we propose a single-step multi-object push-grasp where objects are pushed and grasped simultaneously as the robot closes its gripper. One way to achieve that is to create models of objects in a physics simulator, generate single-step candidate grasps, and test them until a successful grasp is found. However, this is computationally expensive given the large number of physics simulations that could be required to find a grasp \cite{Agboh-ISRR-2019, Agboh-CVS-2020, Mohammed-ICRA-2020}. 

Therefore, we propose necessary conditions for multi-object push-grasping. We use these necessary conditions in a novel multi-object grasp planner to rank and filter grasp candidates, before testing them in a physics simulator. The result is a multi-object grasp planner that, on average, tests only two grasp candidates in a physics simulator before finding a successful one. In addition, we propose a picking system that uses the multi-object grasp planner to pick single or multiple objects at each time.

In simulation experiments, we find that our multi-object grasp planner is 19 times faster than a physics simulator baseline. In physical picking experiments compared to a single object picking baseline, we find that the multi-object grasping system achieves 13.6\% higher grasp success, and is 59.9\% faster. In summary, this work contributes:
\begin{itemize}
    \item[1.] Necessary conditions for frictionless multi-object push-grasping. 
    \item[2.] Theorems and proofs on the necessary conditions.
    \item[3.] A multi-object grasp planner. 
    \item[4.] A picking system that uses both single and multi-object grasps.
    \item[5.] Simulation experiments that compare our grasp planner with a physics simulator baseline and physical experiments that compare the picking system with a single object grasping baseline.
\end{itemize}

\section{Related work}
\label{sec:related_work}
\vspace{-2mm}
Picking multiple objects from a table or a bin is common for humans, e.g. waiters. Most prior work \cite{mahler2017binpicking,Morrison-IJRR-2020, Lou-ICRA-2021} in robotic picking focus on single object grasping. Recently, \citet{Sakamoto-IROS-2021} proposed a picking system that uses both one- and two-object grasps to pick cuboids. In this work, we propose a picking system that can grasp an arbitrary number of extruded convex polygonal objects at once. 

A key question here is how to synthesize or generate grasps. 
There is a large body of work on synthesizing single-object grasps. They typically fall into one of two categories --- analytic or data driven. The analytic methods~\cite{Prattichizzo-Handbook-2008, Rodriguez-IJRR-2012, Kehoe-ICRA-2013} assume known object model and contact locations. They find grasps that can resist external wrenches or constrain the object's motion. On the other hand, data-driven approaches~\cite{Bohg-TRO-2014, Goldfeder-AutonRobot-2011, mahler2016dex, Pauly-Frontiers-2021, Bejjani-IROS-2021} generate grasping models that map directly from sensor readings like an RGB-D image to a successful grasp through various machine learning techniques.  

Considering multi-object grasps, in several works~\cite{Harada-ICRA-1998, Harada-IROS-1998, Harada-TRA-2000}, conditions for enveloping grasps of multiple objects using a multi-fingered robot hand, under the rolling contact assumption are proposed. Multiple objects (already in grasp wrench equilibrium) are grasped and lifted up towards the gripper's palm. \citet{Harada-ARK-2000} introduced active force closure for multiple objects. The work derives conditions to generate an arbitrary acceleration on multiple objects grasped with a multi-fingered robot hand. \citet{Yoshikawa-ICRA-2001} proposed a condition to achieve power grasps where they formulate the multi-object optimal power grasp problem and minimize finger joint torques. 

Other works~\cite{Yamada-ICRA-2005, Yamada-ISMNHS-2005,Yamada-ROBIO-2009,  Yamada-ICMA-2012} propose methods to evaluate the grasp stability of multiple planar objects grasped by a multi-fingered robot hand. Fingers are replaced with 2D spring models and grasp stability is analysed through potential energy stored in the grasp. \citet{Yamada-JCSE-2015} extended these multi-object grasp stability analyses to the 3D case, under both rolling and sliding contact. Results in these multi-object grasping works are mostly only numerical simulations, without real-robot multi-object grasps. In this paper, we derive conditions for equilibrium multi-object grasps, under the frictionless point-contact model. We also show real-robot multi-object grasps. 

Recently, \citet{Chen-IROS-2021} investigated the problem of dipping a robot hand inside a pile of identical spherical objects, closing the hand and estimating the number of objects remaining in the hand after lifting. \citet{Shenoy-CoRR-2021} also studied multi-object grasping in a similar setting of identical spherical objects, but with a goal of transferring the picked objects to another bin. Their approach generates a pre-grasp configuration and flexion strategy that corresponds to the desired number of objects to be picked from a given pile. Our work is focused on multi-object grasps of complex-shaped objects where physics predictions play a major role. 

At the heart of various push-grasping methods \cite{Dogar-RSS-2011, Agboh-Humanoids-2018} is the need to make physics-based predictions of how objects will move when pushed and potentially squeezed together. It is computationally expensive to predict the result of these contact interactions~\cite{Agboh-WAFR-2018, Agboh-arxiv-2021}. Thus, we propose necessary conditions for multi-object push-grasping for the first time to the best of our knowledge. We use these conditions in a novel multi-object grasp planner to synthesize multi-object grasps. Thereafter, we use our grasp planner in a novel picking system that generates both single and multi-object grasps and picks randomly placed objects. 
%
\section{Problem statement}
\label{sec:problem_statement}
\vspace{-3mm}
As shown in Fig.~\ref{fig:mog_intro}, we consider the problem where multiple rigid convex polygonal extruded objects rest in randomly placed positions and orientations on a planar surface visible
from an overhead camera. The goal is to develop a picking algorithm that efficiently grasps and transports all objects to a bin, using single and multi-object grasps. 

\noindent \textbf{Assumptions:}
We assume a frictionless point contact model only \textit{between} objects in a multi-object grasp. We assume antipodal multi-object grasps where each object is kept in equilibrium by two neighbouring objects, or one object and a gripper jaw. We assume a parallel-jaw gripper and objects that are extruded convex polygons with  uniform mass and known geometry.

\noindent \textbf{State and action:}
The state $\vec{x}$ pose of all objects:
$\vec{x} = [\{x_{i}, y_{i}, \theta_{i}\}], \hspace{1mm} \mathrm{for} \hspace{1mm} \mathrm{i} \hspace{1mm} \mathrm{=} \hspace{1mm} \mathrm{0}, \dots ,N_{o} - 1,
$
where $N_{o}$ is the number of objects on the table, $x_i$, $y_i$, and $\theta_{i}$ represent the 2D pose of object $i$, provided by an overhead camera. We represent single and multi-object grasp actions in the same way:  
    $\vec{u} = [x_{g}, y_{g}, \theta_{g}]$
, where $x_{g}, y_{g}$, and $\theta_{g}$ represent the desired grasp pose of the gripper, after which the jaws close with a maximum force $f_{g}$. The grasp action involves four steps. (1) Moving the open gripper above the initial desired grasp pose, and lowering down until just above the surface/table. (2) Closing the gripper jaws. (3) Moving the gripper upwards, off the table, and above the bin. (4) Opening the jaws such that objects fall into the bin.

\section{Multi-Object  Grasps}\label{sec:multi-object-grasps}
\vspace{-1mm}
To plan multi-object grasps, we need to understand conditions under which a multi-object grasp will succeed or fail. In this section, we study the conditions to obtain an equilibrium grasp for single and multiple planar, convex polygonal objects. Equilibrium grasps on an object occur when the grippers can apply forces and torques consistent with the contact model assumed at contact points, such that the net wrench on the object is zero. We study these grasps for multiple objects, under the frictionless point-contact model.
\begin{figure}[t]
    \centering
    \includegraphics[scale=0.4]{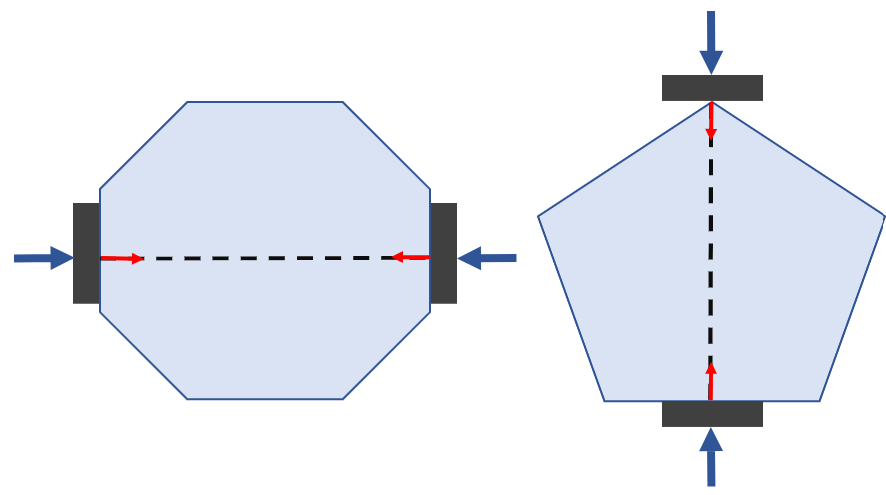}
    \begin{picture}(0,0)
    \put(-130,-10) {(a)}
    \put(-43,-10) {(b)}
    \put(-182, 38){$f_g$}
    \put(-157,38) {$\hat{\vec{n}}_{l}$}
    \put(-105,51) {$\hat{\vec{n}}_{r}$}
    \put(-36,24){$ \frac{\pi}{2}$}
    \put(-49, 4) {$f_g$}
    \put(-49, 90) {$f_g$}
    \put(-83,38){$f_g$}
    \end{picture}
    \vspace{2mm}
    \caption{Frictionless equilibrium grasps for single convex polygonal objects, with a parallel-jaw gripper. There are two possible grasp conditions in the frictionless case: (a) Two parallel edges. (b) Vertex and an opposing edge. The vertex has a perpendicular projection onto the edge.}
    \label{fig:stable-single-object}
    \vspace{-6mm}
\end{figure}
In single-object grasping, under this contact model, the contact force from the left (\textit{l}) and right (\textit{r}) gripper plates can be written as: $\vec{f}_{i} = f_{g} \hat{\vec{n}}_{i}, \hspace{1mm} i\in \{l, r\}$, 
%
%
where $\hat{\vec{n}}_{i}$ is the surface inward unit normal at the respective contact point, and $f_{g}$ is the magnitude of a maximum gripper force. We seek equilibrium grasps under the frictionless point contact model. Thus, contact forces from the left and right grippers must satisfy:
\begin{align}\label{eq:force_balance_single_frictionless}
    f_{g}\hat{\vec{n}}_{l} + f_{g} \hat{\vec{n}}_{r}  = \vec{0}, 
    \hspace{3mm}
    \vec{p}_{l} \times f_{g}\hat{\vec{n}}_{l} + \vec{p}_{r} \times f_{g} \hat{\vec{n}}_{r} = \vec{0},
\end{align}
%
where $\vec{p}_{l}$ and $\vec{p}_{r}$ are the position vectors of the left and right contact points respectively. 
Eq.~\ref{eq:force_balance_single_frictionless} implies that for parallel-jaw grippers on a convex polygonal object, we can achieve equilibrium grasps only if the two contact normals have opposing directions 
($\hat{\vec{n}}_{l} = -\hat{\vec{n}}_{r}$), and lie on the same line. These are antipodal grasps. 

Antipodal grasps on a convex polygonal object occur under any of the following two necessary conditions: (i) two parallel edges as shown in Fig.~\ref{fig:stable-single-object}a, and (ii) one vertex and an opposing edge where the perpendicular projection of the vertex lies inside  the corresponding edge, as shown in Fig.~\ref{fig:stable-single-object}b. There is a possible third case of grasping the convex polygonal object at two vertices, however that is unstable in the frictionless case and we do not consider such grasps. 

We extend the initial frictionless equilibrium grasp analysis to the multi-object case. 
To achieve an equilibrium grasp in the multi-object case, each object in the multi-object grasp must individually be in an antipodal grasp from contacts with grippers or neighbouring objects. In this work, we restrict our study to multi-object grasps where each object involved is kept in equilibrium by only two neighbouring objects or one object and a gripper plate. This implies that all contact normals across all objects must lie on the same line. We show a sample frictionless equilibrium multi-object grasp in Fig.~\ref{fig:stable-multi-object}. 

Specifically, from the single object equation, Eq.~\ref{eq:force_balance_single_frictionless}, the multi-object equilibrium grasp equations for a set of $n_{o}$ objects become: 
\begin{align}\label{eq:force_balance_multi_frictionless}
    f_{g}\hat{\vec{n}}_{l_i} + f_{g} \hat{\vec{n}}_{r_i}  = \vec{0}, \hspace{3mm} i \in \{0, 1,\dots,n_{o}-1\}
\end{align}
\begin{align}\label{eq:torque_balance_multi_frictionless}
  \vec{p}_{l_i} \times f_{g}\hat{\vec{n}}_{l_i} + \vec{p}_{r_i} \times f_{g} \hat{\vec{n}}_{r_i} = \vec{0}, \hspace{3mm} i \in \{0, 1, \dots, n_{o}-1\}, 
\end{align}
where $\hat{\vec{n}}_{l_i}$ and $\hat{\vec{n}}_{r_i}$ are the left and right contact normals for object $i$. For an equilibrium grasp, we find that $\hat{\vec{n}}_{l_i} = -\hat{\vec{n}}_{r_i}$, and both contact normals must lie on the same line for object $i \in \{0,1,\dots,n_{o}-1\}$. This also implies that all normals across all $n_{o}$ objects will lie on the same line. 
\begin{figure}[t]
    \centering
    \includegraphics[scale=0.46]{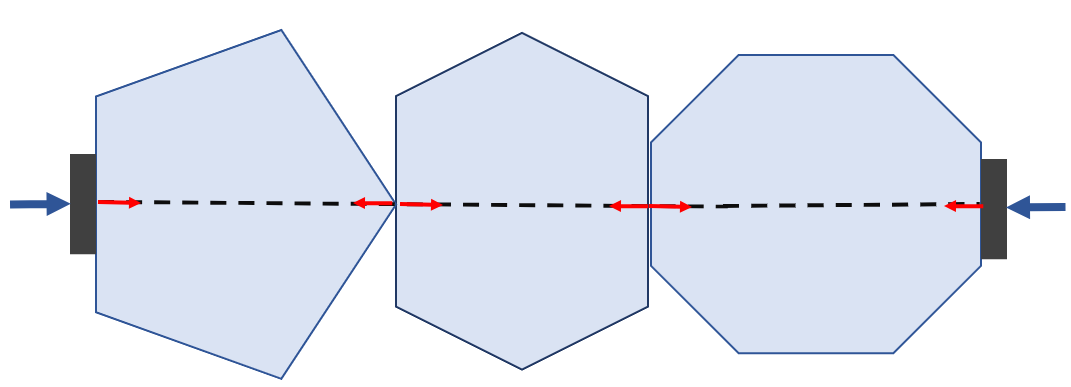}
    \begin{picture}(0,0)
    \put(-252,30) {$f_g$}
    \put(-225,32) {$\hat{\vec{n}}_{l_i}$}
    \put(-180,47) {$\hat{\vec{n}}_{r_i}$}
    \put(-155,32){$\hat{\vec{n}}_{l_{i+1}}$}
     \put(-123,47){$\hat{\vec{n}}_{r_{i+1}}$}
    \put(-95,32){$\hat{\vec{n}}_{l_{i+2}}$}
     \put(-49,47){$\hat{\vec{n}}_{r_{i+2}}$}
     \put(-8,32) {{$f_g$}}
    \end{picture}
    \caption{A sample frictionless equilibrium grasp for multiple convex polygonal objects. Each object is in an antipodal grasp and all contact normals lie on the same line.}
    \label{fig:stable-multi-object}
        \vspace{-7mm}
\end{figure}

\section{Multi-Object Push Grasps}\label{sec:multi-object-push-grasps}
\vspace{-2mm}
Objects almost never start in contact, fully aligned and ready to be grasped as shown in the previous section and in other prior work \cite{Harada-TRA-2000, Yamada-JCSE-2015, Yoshikawa-ICRA-2001}. It is the case in practice that objects are usually apart and need to be squeezed together in a multi-object push-grasp. Hence, the need to make physics-based predictions of how objects will move as the robot pushes them together. 
\subsection{Two Necessary Conditions}
\vspace{-1mm}
Physics simulation for multi-object grasping is computationally expensive. We aim to avoid simulations and significantly speed-up multi-object grasp planning. In this section, we propose two necessary conditions for a multi-object push-grasp to succeed. We propose these conditions for the frictionless point-contact model. The conditions are: \textit{multi-object diameter function} and \textit{intersection area}. If any of these conditions is violated, the push-grasp is guaranteed to fail. We use these necessary conditions to filter out inadmissible grasps, and therefore to avoid computationally expensive simulations.
\subsection{Multi-Object Diameter Function} 
\vspace{-1mm}
\begin{figure}[t]
    \centering
    \includegraphics[scale=0.4]{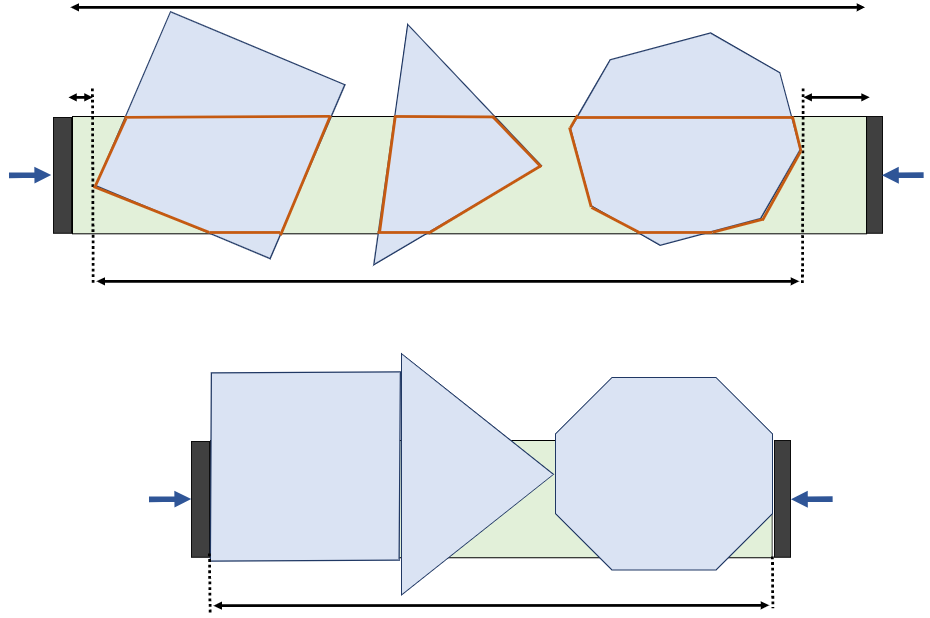}
    \begin{picture}(0,0)
    \put(-152, 87) {$o^{s}_{0}$}
    \put(-104, 87) {$o^{s}_{i}$}
    \put(-64, 87) {$o^{s}_{n_{0}-1}$}
    \put(-165,29) {$f_{g}$}
    \put(-24,29) {$f_{g}$}
    \put(-192,94) {$f_{g}$}
    \put(-9,94) {$f_{g}$}
    \put(-26,82) {$S$}
    \put(-110,59) {$h_{0}$}
    \put(-105,-7) {$h_f$}
    \put(-105,127) {$w_g$}
    \put(-178, 110){$b_l$}
    \put(-26, 110){$b_r$}
    \end{picture}
    \vspace{2mm}
    \caption{A frictionless multi-object grasp. We show the initial internal rectangular region $S$, the gripper stroke $w_g$, the left and right minimum gripper distances to objects, $b_{l}$ and $b_{r}$ respectively, the initial multi-object grasp diameter ($h_{0}$), the final multi-object grasp diameter ($h_{f}$), and $o^{s}_{i}$, the intersection polygon (in red) between $S$ and object $O_i$. We also show that $h_{0} \geq h_{f}$ for any equilibrium multi-object grasp.}
    \label{fig:frictionless-mopg}
        \vspace{-5mm}
\end{figure}

In single-object grasping, consider quasi-statically closing a parallel-jaw gripper such that both jaws simultaneously make the first contact with the object. The distance between the jaws at this initial contact is called the \textit{diameter}, first introduced by \citet{Goldberg-Algorithmica-1993} for frictionless grasping. Let this initial diameter be $d_{0}$. As we continue to close the grippers past this initial contact, the diameter must decrease until it reaches a local minimum value, at the stable grasp configuration. Let this final diameter be $d_{f}$. Thus, for any stable single-object grasp, we have that: $d_{0} \geq d_{f}$. Next, we extend the single-object diameter condition to the multi-object case. 

We begin by introducing the multi-object grasping diameter ($h$). We illustrate how to compute $h$ given a grasp $\vec{u}$ in Fig.~\ref{fig:frictionless-mopg}. We define the initial internal region between the parallel-jaw grippers to be the internal rectangular region, $S$. Let $o^{s}_{i} = S \cap O_{i}$, be the intersection polygon between $S$ and object $O_i$.
%
%
Let $w_g (t)$ be the gripper stroke at time $t$. Let $b_{l}(t)$ be the shortest distance between $o^{s}_{0}$ and the left jaw, and $b_{r}(t)$ be the shortest distance between $o^{s}_{n_{0}-1}$ and the right jaw. Then, the multi-object grasp diameter is:
\begin{align}
    h(t) = w_{g}(t) - (b_{l}(t) + b_{r}(t)). 
\end{align}

Let $h_{0}$ be the initial multi-object grasp diameter at time $t_{0}$, and let $h_{f}$ be the corresponding final multi-object grasp diameter, at time $t_{f}$ when the grippers become stationary after closing. $h_{0} = h(t_{0}) = w_{g}(t_{0}) - (b_{l}(t_{0}) + b_{r}(t_{0}))$, and $h_{f} = h(t_{f}) = w_{g}(t_{f}) - (b_{l}(t_{f}) + b_{r}(t_{f}))$. When the grippers become stationary at $t_{f}$, $b_{l}(t_{f}) = b_{r}(t_{f}) = 0$, and $h_{f} = w_{g}(t_{f})$. Since parallel-jaw grippers close in only one direction, it implies that $w_{g}(t_{i}) \geq w_{g}(t_{i+1})$, where $t_{i+1} > t_{i}$, and by extension $w_{g}(t_{0}) \geq w_{g}(t_{f})$. Therefore, any multi-object grasp will satisfy:
\begin{align}\label{eq:diameter-init-condition}
    h_{0} \geq h_{f}
\end{align}
In Fig.~\ref{fig:frictionless-mopg}, we show a sample multi-object initial diameter $h_{0}$ and final diameter $h_{f}$, for a frictionless multi-object grasp. The multi-object grasp diameter condition in Eq.~\ref{eq:diameter-init-condition} requires knowledge of $h_{f}$ which in turn requires a physics simulation to compute. This defeats the purpose of this condition. 

\begin{theorem}
Given a set of $n_{o}$ objects, there exists a constant multi-object grasp diameter, $h_{f_\mathrm{min}}$ such that any grasp $\vec{u}$, with some initial multi-object grasp diameter $h_{0}$ is guaranteed to fail when $h_{0} < h_{f_\mathrm{min}}$. 
\end{theorem}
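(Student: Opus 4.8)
The plan is to exhibit $h_{f_\mathrm{min}}$ as a purely geometric quantity of the object set, independent of any particular grasp, and then chain it with the already-established inequality $h_0 \geq h_f$ from Eq.~\ref{eq:diameter-init-condition}. The key idea is that a successful grasp terminates in an equilibrium multi-object configuration, and in any such configuration the final diameter $h_f$ cannot drop below a fixed constant determined by the objects' shapes. Once $h_{f_\mathrm{min}}$ is shown to lower-bound $h_f$ for every successful grasp, the theorem follows by contraposition.

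First I would characterize the final configuration. At $t_f$ the grippers are stationary with $b_l(t_f) = b_r(t_f) = 0$, so $h_f = w_g(t_f)$. For a successful multi-object grasp this is an equilibrium configuration in the sense of Section~\ref{sec:multi-object-grasps}: the $n_o$ objects form a chain in which object $0$ contacts the left jaw, object $n_o-1$ contacts the right jaw, consecutive objects contact each other, and all contact normals lie on a single line, the grasp axis. Because the objects touch without gaps, the jaw-to-jaw span decomposes into per-object extents along the grasp axis:
\begin{align}
 h_f = \sum_{i=0}^{n_o-1} d_i(\phi_i),
\end{align}
where $d_i(\phi_i)$ is the single-object grasp diameter of object $i$ (its extent along the grasp axis) when oriented at angle $\phi_i$ relative to that axis.

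Next I would bound each term from below. Each $O_i$ is a fixed nondegenerate convex polygon, so the diameter function $\phi \mapsto d_i(\phi)$ is continuous on the compact circle and attains a strictly positive minimum $w_i = \min_\phi d_i(\phi)$, the minimum width of the polygon. Defining
\begin{align}
 h_{f_\mathrm{min}} = \sum_{i=0}^{n_o-1} w_i
\end{align}
gives a constant depending only on the object set with $h_f \geq h_{f_\mathrm{min}}$ for every equilibrium multi-object grasp. Combining with $h_0 \geq h_f$ gives $h_0 \geq h_{f_\mathrm{min}}$ whenever the grasp succeeds; contrapositively, $h_0 < h_{f_\mathrm{min}}$ forces failure, which is exactly the claim.

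The main obstacle is the decomposition step and, in particular, whether the per-object minima are simultaneously and independently achievable or whether compatibility between neighbouring contacts raises the bound. Strictly, each inter-object contact must itself be antipodal-compatible (edge--edge, or vertex--edge with an interior projection), so the angles $\phi_i$ are coupled rather than free. This does not threaten the argument: relaxing the coupling can only lower an infimum, so $\sum_i w_i$ remains a valid lower bound on $h_f$, which is all a necessary condition requires. I would therefore state $h_{f_\mathrm{min}}$ as a lower bound rather than the exact attained minimum, while noting that continuity together with compactness of the space of equilibrium configurations guarantees that a true minimizer exists, so a tighter (larger, hence more selective) constant could be computed if a sharper filter were desired.
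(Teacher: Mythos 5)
Your proof is correct and follows essentially the same route as the paper's: decompose $h_f$ into per-object final diameters along the common grasp axis (justified by all contact normals lying on one line), lower-bound each term by a per-object constant, sum to obtain $h_{f_\mathrm{min}}$, and chain with $h_0 \geq h_f$ to conclude by contraposition. The only difference is that you obtain the per-object minimum as the minimum width over all orientations via continuity and compactness, whereas the paper enumerates the finite set of equilibrium single-object grasp diameters from the two antipodal conditions --- for convex polygons these coincide, since the minimum width is attained at a valid vertex--edge antipodal pair --- and your explicit treatment of the coupling between neighbouring objects' orientations addresses a point the paper leaves implicit.
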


\begin{proof}

Every polygonal object $i$ in a successful multi-object grasp of $n_{o}$ objects has a final diameter $d^{i}_{f}$, at time $t_{f}$ when the grippers become stationary after they close. This corresponds to a multi-object grasp diameter $h_{f} = \sum^{n_{0}-1}_{i=0} d^{i}_{f}$. This summation is true for any successful multi-object grasp because contact normals across all $n_{o}$ objects must lie on the same line (Eq.~\ref{eq:force_balance_multi_frictionless} and Eq.~\ref{eq:torque_balance_multi_frictionless}).

\begin{figure}[t]
    \centering
    \includegraphics[scale=0.4]{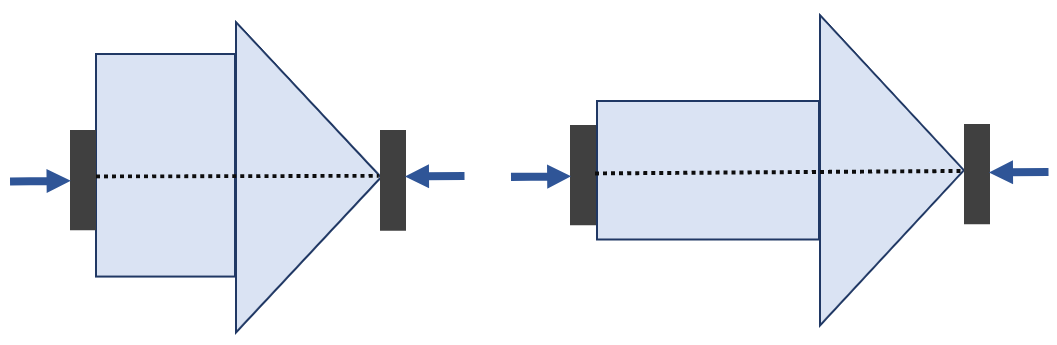}
    \begin{picture}(0,0)
    \put(-180,38) {$h_{f_{min}}$}
    \end{picture}
    \caption{ All possible unique final frictionless multi-object grasp diameters for a rectangle and an isosceles triangle. There are only two unique multi-object grasp configurations (excluding vertex-vertex contacts). The left configuration has the minimum multi-object diameter $h_{f_\mathrm{min}}$.}
    \label{fig:alls-hs}
        \vspace{-5mm}
\end{figure}

Every object has a minimum final diameter ($d^{i}_{f_{min}}$), at which a stable frictionless single-object grasp will occur. We can enumerate all final object grasp diameters for any convex polygonal object under the frictionless point-contact model using the two conditions shown in Fig.~\ref{fig:stable-single-object}. Let the finite number of possible final diameters be $N^{i}_{d_f}$ for a given object $i$. The minimum final diameter can then be written as: 
\begin{align}\label{eq:single-min-final-diameter}
    d^{i}_{f_\mathrm{min}} = \min (d^{i}_{f_{j}}), \hspace{3mm} j \in \{0,1, \dots, N^{i}_{d_{f}}-1\}. 
\end{align}

Then, we compute the minimum multi-object grasp diameter $h_{f_\mathrm{min}}$ at which a stable frictionless multi-object grasp of $n_{o}$ objects will occur as:
\begin{align} \label{eq:frictionless-multi-object-diameter}
    h_{f_\mathrm{min}} = \sum^{n_{o}-1}_{i=0} d^{i}_{f_{min}}.
\end{align}

Recall from Eq.~\ref{eq:diameter-init-condition} that $h_{0} \geq h_{f}$ for any multi-object grasp, but $d^{i}_{f} \geq d^{i}_{f_\mathrm{min}}$ which implies $h_{f} \geq h_{f_\mathrm{min}}$. Therefore any successful multi-object grasp must satisfy:
\begin{align}\label{eq:multi-object-diameter}
    h_{0} \geq h_{f_\mathrm{min}}.
\end{align}

Thus, if the initial multi-object grasp diameter is less than the minimum possible final multi-object grasp diameter ($h_{0} < h_{f_\mathrm{min}}$), the grasp $\vec{u}$ is guaranteed to fail. This means that $h_{0}$ is so small that as the gripper closes, a stable multi-object grasp diameter can never be reached, completing the proof. \qed
\end{proof}
 
 For example, consider the case in Fig.~\ref{fig:alls-hs}, where we have two objects such that $n_{o}=2$. The rectangular object ($O_0$) has two unique stable grasp configurations under frictionless grasping with a parallel-jaw gripper ($N^{0}_{d_{f}}=2$), disregarding the other configurations due to symmetry, and vertex-vertex grasps. Similarly, the triangular object ($O_1$) has only one unique stable grasp configuration ($N^{1}_{d_{f}}=1$), again disregarding vertex-vertex grasps. Thus, there are two unique multi-object grasp configurations in the frictionless case and the left configuration has the minimum multi-object grasp diameter $h_{f_\mathrm{min}}$. We show an example of a grasp that failed due to a violation of the multi-object grasp diameter condition in Fig.~\ref{fig:mogviolations}(top). 

\subsection{Intersection Area} 

Each object in a multi-object equilibrium grasp needs to also be in a single-object equilibrium grasp. This requires at least two contacts, whether from the parallel jaws or other objects. Thus, as the gripper closes, if we predict no contacts for at least one object, the multi-object grasp fails. 

Let $A_{i}(t) = \mathrm{Area}(o^{s}_{i})$, be the area of the intersection polygon for object $i$, during a multi-object grasp.
Then, we require the intersection area for each object be greater than zero when the gripper closes.  $A_{i}(t)$ changes as the gripper closes with constant force $f_{g}$, and hence is a function of time. Thus, the intersection area necessary condition for $n_{o}$ objects become:
\begin{align}\label{eq:intersection_area}
 A_{i} (t_{f}) > 0, \hspace{1mm} i \in \{0,1,\dots, n_{o}-1\}
\end{align}
where $t_f > 0$ is when the gripper becomes stationary after closing. Knowing exactly how $A_{i} (t)$ changes as the gripper closes requires a physics simulation, which defeats the purpose of this condition. However, observe that if the starting intersection area for an object is zero it is likely to remain so as the gripper closes. This is especially true when the object of interest is sufficiently far away from $S$. 


\begin{figure}[t]
    \centering
    \includegraphics[scale=0.56]{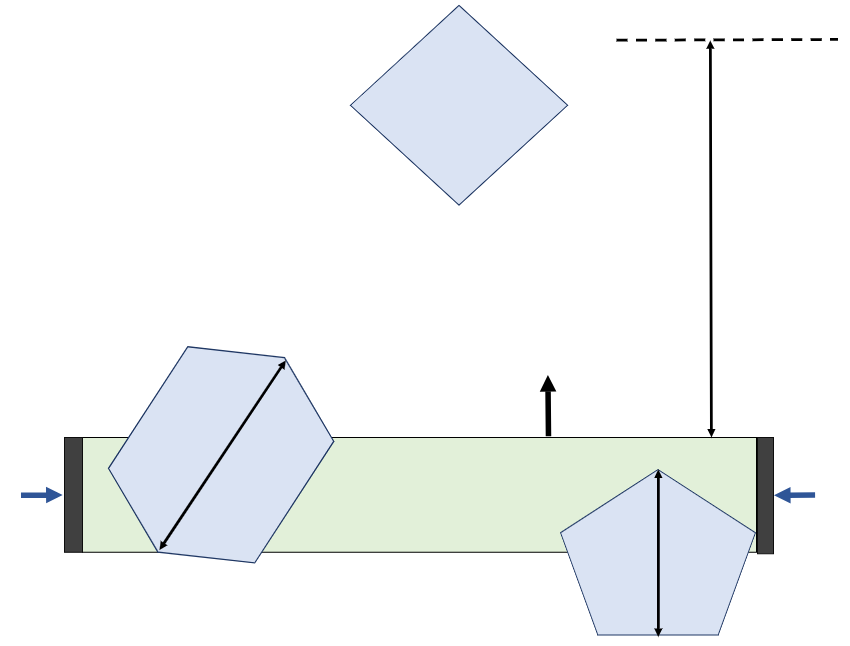}
    \begin{picture}(0,0)
    \put(-170,45) {$m_{j}$}
    \put(-111, 143) {$O_{i}$}
    \put(-76,25) {$m_{j+1}$}
    \put(-38,105) {$M=m_{j}+m_{j+1}$}
    \put(-81,60){$\hat{\vec{n}}_{S}$}
     \put(-232,30) {$f_{g}$}
     \put(-13,30) {$f_{g}$}
     \put(-111,35){$S$}
    \end{picture}
    \caption{An illustration of the intersection area necessary condition's proof. $M=\sum^{n_{o}-1}_{j=1}m_{j}$ is an upper bound on $\epsilon$, the minimum distance between $O_{i}$ and $S$ such that the multi-object grasp is guaranteed to fail when $A_{i}(0)=0$.}
    \label{fig:int-area-proof}
    \vspace{-5mm}
\end{figure}

\begin{theorem}
Given an object $O_{i}$ in a multi-object grasp $\vec{u}$ of $n_{o}$ objects, if $A_{i}(0)=0$ then there exists some minimum distance $\epsilon > 0$ between $O_{i}$ and $S$ such that the multi-object grasp is guaranteed to fail if the distance between $O_i$ and $S$ is larger than $\epsilon$.
\end{theorem}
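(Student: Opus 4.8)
The plan is to prove this by a \emph{displacement-budget} argument: for the grasp to include $O_i$, the squeezing motion of the remaining objects must carry $O_i$ across the gap that initially separates it from $S$, yet the total distance the configuration can advance toward $O_i$ as the jaws close is finite, so a gap that is too wide can never be bridged.

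First I would state precisely what success demands. By the intersection-area necessary condition (Eq.~\ref{eq:intersection_area}), a successful grasp requires $A_i(t_f) > 0$; but $A_i(0) = 0$ means $O_i \cap S = \emptyset$ at $t_0$, so $O_i$ is separated from $S$ by a positive distance $d$ along the relevant direction $\hat{\vec{n}}_S$ indicated in the figure. Because all contacts are frictionless point contacts, bodies can only push and never pull, so the only mechanism by which $A_i$ can become positive is for the chain of objects lying between $O_i$ and $S$ to be driven toward $O_i$ until it makes contact; in particular the material facing $O_i$ must advance by at least $d$.

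Next I would bound how far that advance can be. As the gripper closes quasi-statically over its finite stroke, each of the other $n_o - 1$ objects can be displaced toward $O_i$ by at most a finite amount $m_j$, fixed by the clearance around it, i.e. by how far it may travel before its own motion is arrested. I would then argue that these per-object budgets compose additively along the chain, so the cumulative advance of the region facing $O_i$ cannot exceed $M = \sum_{j=1}^{n_o-1} m_j$. This is the step I expect to be the main obstacle: making rigorous both that every intermediate displacement is individually bounded and that a cascade of frictionless pushes accumulates only sub-additively, rather than permitting unbounded propagation, requires a careful accounting of the contact geometry together with the fact that the jaws travel only a bounded distance.

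Finally I would choose $\epsilon$. Setting $\epsilon = M$ (any $\epsilon \le M$ certified by the same geometry also works), if the distance from $O_i$ to $S$ exceeds $\epsilon$, then even under the most favourable sequence of pushes the advancing front falls short of $O_i$, so they never meet and $A_i(t_f) = 0$. This violates Eq.~\ref{eq:intersection_area}, hence the grasp is guaranteed to fail, establishing the existence of the claimed threshold $\epsilon > 0$ and completing the argument.
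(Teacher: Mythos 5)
Your overall skeleton matches the paper's: both arguments observe that $A_{i}(t_{f})>0$ can only be achieved if the other objects make contact with $O_{i}$ as the jaws close, both introduce a finite bound $M=\sum_{j=1}^{n_{o}-1}m_{j}$ on how far that contact can reach along $\hat{\vec{n}}_{S}$, and both conclude $\epsilon\leq M$. The difference --- and the gap --- is in what $m_{j}$ is. You take $m_{j}$ to be a per-object \emph{displacement} budget, ``fixed by the clearance around it,'' and then need the cascade of frictionless pushes to accumulate at most additively. You correctly identify this as the main obstacle, and as stated it does not go through: an object's travel during the squeeze is not bounded by its local clearance (clearance is not even a well-defined scalar here), and nothing in the quasi-static frictionless model prevents an intermediate object from being driven a long way once a contact chain forms. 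So the key quantitative step of your proof is left unproven, and the natural ways to patch it (e.g.\ bounding all displacements by the total jaw travel) are themselves nontrivial because a squeezed object can be ejected rather than merely translated.

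The paper avoids the displacement analysis entirely by making $m_{j}$ a \emph{static shape} quantity: the length of the longest diagonal of object $O_{j}$. Then $M$ bounds the maximum spatial span of any chain of the other $n_{o}-1$ objects lined up along $\hat{\vec{n}}_{S}$ between $S$ and $O_{i}$; no contact chain can bridge a gap wider than $M$ no matter how the objects move, so contact with $O_{i}$ is impossible, $A_{i}(t_{f})=0$, and Eq.~\ref{eq:intersection_area} is violated. This is a reachability (packing) argument rather than a kinematic one, and it needs no claim about how far anything moves. It is worth noting that your displacement-budget idea does appear in the paper, but only \emph{after} the theorem, as the informal justification for relaxing $\epsilon$ to $0$ via $\Delta h_\mathrm{max}=h_{0}-h_{f_\mathrm{min}}$ --- and there it is explicitly presented as an assumption, not a proof, which is consistent with your own diagnosis that this step resists rigorous treatment.
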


\begin{proof}
Consider Fig.~\ref{fig:int-area-proof}. Object $O_i$ moves towards $S$ and can subsequently achieve $A_{i}(t_{f}) > 0$ only when there is contact between it and other objects as the gripper closes. $\epsilon$ is the minimum distance between $O_{i}$ and $S$ where surrounding objects cannot make contact as the gripper closes with grasp $\vec{u}$. Let $m_{j}$ for a surrounding object $j$ be the length of its longest diagonal, and let $M=\sum^{n_{o}-1}_{j=1}m_{j}$ be the sum of such diagonals for all other objects. This is the farthest other ($n_{o}-1$) objects  can be lined up along the normal direction ($\hat{\vec{n}}_{S}$) of $S$ to reach $O_i$. 
Assuming quasi-static grasping, other objects cannot make contact with $O_{i}$ beyond $M$. It is an upper bound on $\epsilon$ such that $\epsilon \leq M$. Therefore if $O_{i}$ is at least $M$ minimum distance away from the internal rectangular region and $A_{i}(0)=0$, the multi-object grasp, including the object $O_i$ is guaranteed to fail, which completes the proof. \qed
\vspace{-3mm}
\end{proof}
In practice, $M$ would be too large to provide useful grasp filtering. 
%
Recall that there is a minimum grasp diameter ($h_{f_\mathrm{min}}$) at which a multi-object grasp succeeds. This means that starting at $h_{0}$, the grippers can only move a maximum distance of $\Delta{h_\mathrm{max}} = h_{0} - h_{f_\mathrm{min}}$, otherwise the grasp fails. Under quasi-static grasping and given that the motion of objects in $S$ are only due to the gripper motion, we assume each surrounding object will move at most by $\Delta{h_\mathrm{max}}$ 
along $\hat{\vec{n}}_S$ and
towards $O_{i}$.
Given, that $\Delta{h}_\mathrm{max}$ is small for a standard parallel-jaw gripper, we assume that contacts from surrounding objects $O_{j}$ will be too short to bring object $O_{i}$ into the internal rectangular region $S$.  Therefore, we approximate $\epsilon=0$, such that the relaxed intersection area condition becomes:
\begin{align}\label{eq:relaxed_int_area}
 A_{i} (0) > 0, \hspace{1mm} i \in \{0,1,\dots, n_{o}-1\}.
 \vspace{-3mm}
\end{align}
For example, take Fig.~\ref{fig:int-area-proof} where the multi-object grasp diameter condition is satisfied. Since $A_{i}(0)=0$, the grasp is guaranteed to fail with the relaxed condition in Eq.~\ref{eq:relaxed_int_area}. We show an example of a multi-object grasp that failed due to a violation of the intersection area necessary condition in Fig.~\ref{fig:mogviolations} (bottom). 

\section{Picking System}
\label{sec:picking_system} 
\vspace{-2mm}
This section details a picking system that uses both single and multi-object grasps to pick randomly placed objects. 
%
%
\subsection{Picking Algorithm}
\begin{figure*}[t]
		\begin{subfigure}[b]{0.19\textwidth}
	 \centering 
		\includegraphics[scale=0.25]{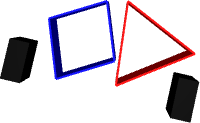}
	\end{subfigure}
		\begin{subfigure}[b]{0.19\textwidth}
	 \centering 
		\includegraphics[scale=0.25]{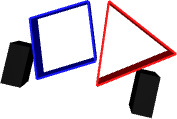}
	\end{subfigure}
		\begin{subfigure}[b]{0.19\textwidth}
	 \centering 
		\includegraphics[scale=0.25]{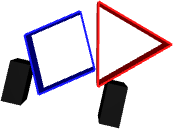}
	\end{subfigure}
	   \begin{subfigure}[b]{0.19\textwidth}
	 \centering 
		\includegraphics[scale=0.25]{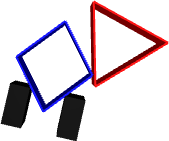}
	\end{subfigure}
	\begin{subfigure}[b]{0.19\textwidth}
	 \centering 
		\includegraphics[scale=0.25]{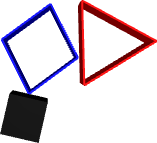}
	\end{subfigure}

	\begin{subfigure}[b]{0.19\textwidth}
	 \centering 
		\includegraphics[scale=0.25]{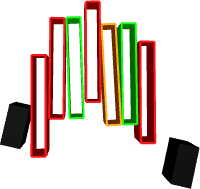}
	\end{subfigure}
		\begin{subfigure}[b]{0.19\textwidth}
	 \centering 
		\includegraphics[scale=0.25]{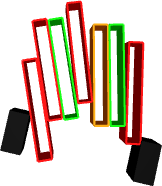}
	\end{subfigure}
		\begin{subfigure}[b]{0.19\textwidth}
	 \centering 
		\includegraphics[scale=0.25]{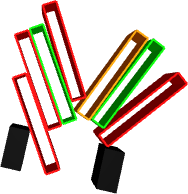}
	\end{subfigure}
	   \begin{subfigure}[b]{0.19\textwidth}
	 \centering 
		\includegraphics[scale=0.25]{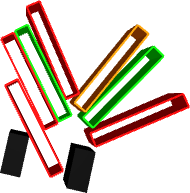}
	\end{subfigure}
	\begin{subfigure}[b]{0.19\textwidth}
	 \centering 
		\includegraphics[scale=0.25]{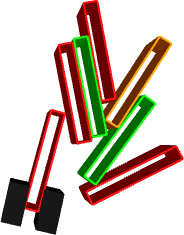}
	\end{subfigure}
	 \caption{Two examples of frictionless multi-object grasps that failed due to a violation of the two conditions. \textbf{Top row}: Multi-object diameter failure.  \textbf{Bottom row}: intersection area failure. The black rectangles are the parallel jaws of the gripper.}
	  \label{fig:mogviolations}
\end{figure*}
We present a picking algorithm in Alg.~\ref{alg:picking-algorithm}. Given a current state $\vec{x}$ containing $N_{o}$ objects (line 2), the algorithm finds a multi-object grasp for the largest group of objects that can be grasped together, using the GraspPlanner(.) subroutine (line 6) and executes the grasp, until no objects are left on the table or a time limit is reached (line 10). The CreateObjGroups(.) subroutine (line 3) loops through center points of objects and creates groups of all objects whose centers are within half a gripper width radius. Thereafter it eliminates all object groups that have a superset in the object groups list, apart from the single objects. The RankObjGroups(.) subroutine (line 4) ranks the list of object groups by their size, such that the grasp planner attempts to find a grasp for the largest object groups first. We provide details of the GraspPlanner(.) subroutine in the subsequent section. A grasp execution can result in failure where desired objects escape the grippers and other objects can be displaced from their original poses. The online nature of  Alg.~\ref{alg:picking-algorithm}, permits finding new grasps until the table is cleared or a time limit is reached. 
\subsection{Frictionless Multi-Object Grasp Planner (GP)}
\label{sec:grasp-planner}
\vspace{-3mm}
In this section, we propose a multi-object grasp planner. It is summarized in Alg.~\ref{alg:grasp-planner}.
The GraspPlanner(.) returns a grasp $\vec{u}$ if it can find one for the current group of objects, $obj{\_}group$, given the current state. 

\setlength{\textfloatsep}{2mm}
\begin{algorithm}[b]
    \SetKwInOut{Input}{Input}
    \SetKwInOut{Output}{Output}
    \SetKwInOut{Parameters}{Parameters}
    \SetKwInOut{Subroutines}{Subroutines}
    \Do{$N_{o}$ $>$ $\mathrm{0}$ $\mathbf{and}$ $\mathrm{time}$ $\mathrm{remaining}$}{ 
    $\vec{x}$, $N_{o}$ $\gets$ GetCurrentState(.)\\
    obj_groups $\gets$ CreateObjGroups($\vec{x}$)\\
    ranked_obj_groups $\gets$ RankObjGroups(obj_groups) \\

    \For{obj_group $\mathbf{in}$ $\mathrm{ranked\_obj{\_}groups}$}{
    $\vec{u}$ $\gets$ GraspPlanner($\vec{x}$, \textit{obj_group}) \\
    \If{ $\vec{u}$ $!$$=$ \{\}} 
    { 
     Execute $\vec{u}$\\
     break
    }
    }}
    \caption{Picking Algorithm}\label{alg:picking-algorithm}
\end{algorithm}
\begin{algorithm}[t]
    \SetKwInOut{Input}{Input}
    \SetKwInOut{Output}{Output}
    \SetKwInOut{Parameters}{Parameters}
    \SetKwInOut{Subroutines}{Subroutines}
    \Input{$\vec{x}$: Current state,  
    obj_group: Objects in the potential grasp}
    \Output{$\vec{u}$: A grasp action}
    
    grasp_cands $\gets$ GenGraspCands($\vec{x}$, obj_group) \\ 
    ranked_grasp_cands $\gets$ RankGraspCands(grasp_cands) \\
    \For{$\vec{u}$ $\mathbf{in}$ $\mathrm{ranked\_grasp\_cands}$}{
    \If {$\mathbf{not}$ $\mathrm{GraspFailure}$($\vec{x}$, $\vec{u}$)}{ 
        \If {$\mathrm{CheckGraspSuccess}$($\vec{x}$, $\vec{u}$)} {\Return {$\vec{u}$}}
    }
    }\Return \{\} \\
    \caption{Grasp Planner (GP)}\label{alg:grasp-planner}
\end{algorithm}
\setlength{\floatsep}{2mm}
First, the algorithm generates multiple grasp candidates (line 1). Second, it ranks these grasp candidates based on their likelihood of being successful (line 2). Finally, we sequentially test grasps to find a successful one (lines 3--6). To test a multi-object grasp we first use a grasp filtering system. We check if the grasp satisfies the two neccessary conditions outlined in Sec.~\ref{sec:multi-object-push-grasps} on line 4. If the necessary conditions are satisfied, we check for grasp success using a physics simulator (line 5). We explain the main subroutines in the following paragrasphs:\\
\noindent\textbf{GenGraspCands( ):}
Recall that a grasp action is parametrized as the position and orientation of the gripper: $\vec{u} = [x_{g}, y_{g}, \theta_{g}]$. The gripper reaches the desired pose and closes. Given a group of objects, we first find the convex hull of all of the objects. Then, we generate $N_{p}$  position samples that uniformly cover the convex hull. At each position sample, we generate $N_{\theta}$ orientation samples. We reject samples that result in collisions, between the jaws and any object --- inside or outside the group.\\
\noindent\textbf{RankGraspCands( ):} We aim to find a parameter that quickly predicts the likelihood of a multi-object push-grasp success and use it to rank grasp candidates. 
Recall the two necessary conditions for multi-object push-grasp success in Sec.~\ref{sec:multi-object-push-grasps} --- multi-object diameter and intersection area. These conditions suggest that the larger the area of the intersection polygon for each object, the more likely a grasp will be successful.  
Therefore, we propose to use the total intersection area ($A_{T} = \sum^{n_{o}-1}_{i=0} A_{i} $) as a heuristic to judge how successful a grasp will be. The higher the total area, the more likely the grasp will be successful. $A_{i}$ is computed as the area of intersection between object $O_i$ and $S$.\\
\noindent\textbf{GraspFailure( ):} We filter multi-object grasps using the two conditions provided in Sec.~\ref{sec:multi-object-push-grasps} --- multi-object grasp diameter and intersection area. If any of the two necessary conditions is not satisfied, the multi-object grasp is guaranteed to fail. \\
\noindent\textbf{CheckGraspSuccess( ):} To check the success of a grasp $\vec{u}$, we use a physics simulator where we assume objects are rigid, and have a uniform mass distribution. We close the gripper at the specified pose, then lift it above the table to check that the grasp is stable. Using a physics simulator means that our grasp planning system has no false positives for simulation experiments. However, we expect some grasps to fail in physical experiments due to uncertainty in state and model parameters and aleatory uncertainty related to friction, sensing, and control.
\section{Experiments}
\vspace{-1mm}
We conduct simulation and physical experiments to evaluate the multi-object push-grasping necessary conditions, grasp planner, and picking system. 
In the following subsections, we explain the general setup, experimental details, baselines, and results.  

\subsection{Simulation and Physical Experiments:}
\vspace{-1mm}
We use Mujoco 2.1.0 as the physics simulator. In all experiments, we use a fixed set of objects, shown in Fig.~\ref{fig:mog_intro}. It contains a total of 33 objects from 3-sided to 6-sided convex polygons. We create a model of these objects in Mujoco where we ensure that objects are rigid and have a uniform mass distribution. In physical experiments, we use an RGB-D camera to estimate the pose of objects on a table. We approximate the frictionless case by selecting low ($\mu=0.01$) friction parameters in Mujoco. 

\subsection{Evaluation of Necessary Conditions in Simulation:}
\label{sec:exp_mog_necessary_conditions}
\vspace{-1mm}
In this section, we conduct simulation experiments to evaluate the proposed necessary conditions for multi-object push-grasping. We use a physics simulator to get ground-truth multi-object push-grasping results. If a necessary condition is violated, the grasp is predicted to fail (negative). If the ground truth also fails, we term this a true negative, otherwise it is a false negative. An important question here is what percentage of true negatives can we predict using these necessary conditions? Are there any false negatives? These questions will specifically help us evaluate the practicality of our intersection area assumption in Sec.~\ref{sec:multi-object-push-grasps}.

\begin{figure}[t]
\centering
\includegraphics[scale=0.5]{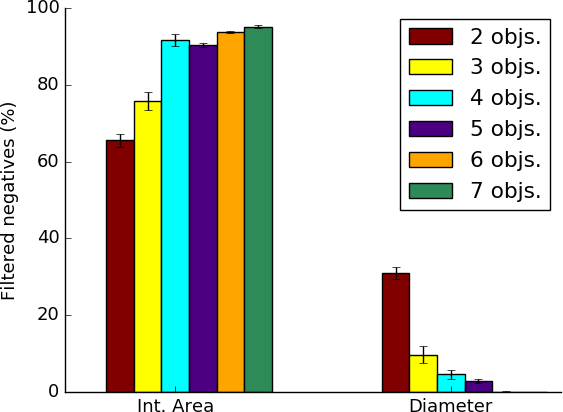}
\caption{Simulation experimental evaluation of the necessary conditions for multi-object push-grasping. Using both the intersection area (Int. Area) and the multi-object diameter (Diameter) necessary conditions, we can predict on average across all objects \textbf{93.4\%} of true-negative grasps in any given set of grasp candidates. The average number of grasp candidates in any given scene is \textbf{193.19 $\pm$ 1.54}. All errors are within 95\% confidence interval of the mean.}
	\label{fig:neccessary-conditions-results}
\end{figure}
We consider classes of 2-object to 7-object grasps (limited only by gripper width). For each multi-object grasp class, we create $200$ randomly generated scenes. That is a total of 1200 grasping scenes. Thereafter, for each grasp scene, we generate $N_{p} \times N_{\theta} = 70 \times 7 $ grasp candidate samples, and reject samples in collision. 
%
For each grasp candidate, we check both multi-object push-grasping necessary conditions and compare results to a Mujoco simulation of the grasp. We record any true negatives and any false negatives, across all grasp candidates, grasp scenes, and classes. 

%
Results on evaluating the necessary multi-object push-grasping conditions can be found in Fig.~\ref{fig:neccessary-conditions-results}. 
We did not record any false negatives across any object class---a result that supports the mild intersection area assumption.
A majority of the true-negative grasps (\textbf{85.4\%} on average) were predicted by the intersection area condition, while the multi-object diameter function predicted \textbf{8\%}. Both conditions combined predicted \textbf{93.4\%} of true negatives, with a \textbf{0.5\%} overlap between their predictions only for the 2-object case. We find that the multi-object grasp diameter condition violations reduce with increasing number of objects (from \textbf{31\%} for two objects), while violations for the intersection area increased. See Fig.~\ref{fig:mogviolations} for grasps that failed due to a violation of the multi-object diameter condition and the intersection area condition. 
%
\subsection{Grasp Planning Experiments in Simulation}
We conduct experiments in simulation to evaluate the multi-object grasp planner. Our goal here is to understand the effects of the grasp filtering and ranking systems. We follow the same process as Sec.~\ref{sec:exp_mog_necessary_conditions} to generate random scenes and corresponding grasp candidates. In each scene, we use the grasp planner (GP) and three baselines to find a grasp. 
We record the grasp planning time --- total time to return a grasp or report that no successful grasp was found. We also record the number of grasp samples tested/considered before a grasp is found. 

\noindent\textbf{Baseline 1 (Rand-Phys):}
This is a version of the multi-object grasp planner where we randomize the grasp candidates after they are generated (no grasp ranking). Then test each grasp sequentially in only the physics simulator (no grasp filtering), until a grasp is found or all grasp candidates are tested. 

\noindent\textbf{Baseline 2 (Rank-Phys):}
This is a version of the multi-object grasp planner that ranks grasps but tests grasps sequentially using only the physics simulator (no grasp filtering). 

\noindent\textbf{Baseline 3 (Rand-Fil-Phys):} This randomizes the grasp candidates (no ranking) as in baseline 1, but uses grasp filtering and the physics simulator.

\noindent\textbf{Results:} 
We show simulation multi-object grasp planning experimental results in Fig.~\ref{fig:grasp-planning-results}. Planning time results are in Fig.~\ref{fig:gp_planning_time}, results on grasps tested in the physics simulator are in Fig.~\ref{fig:gp_tested_cands}.
Our proposed approach (GP) has the lowest planning time of \textbf{2.2 seconds} on average which is \textbf{19 times faster} than directly using only a physics simulator (Rand-Phys). We note that the physics simulator models the full dynamics of the system but we do not, and this explains the speedup.
We find that the grasp ranking method reduced the number of tested grasps by \textbf{93.44\%}, when we compare Rand-Phys and Rank-Phys.  
\begin{figure*}[t]
\centering
	\begin{subfigure}[b]{0.497\textwidth}
	\centering
		\includegraphics[scale=0.43]{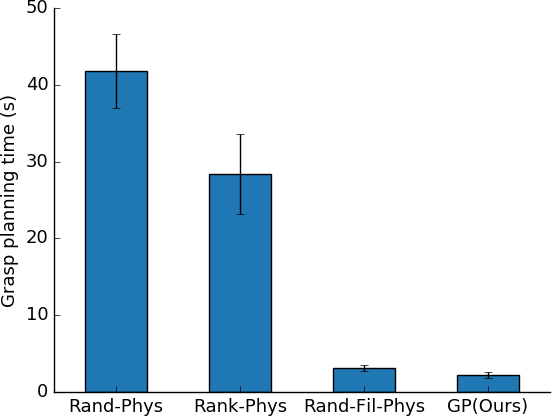}
		\caption{Planning time.}
		\label{fig:gp_planning_time}
	\end{subfigure}
	\begin{subfigure}[b]{0.497\textwidth}
	 \centering 
		\includegraphics[scale=0.43]{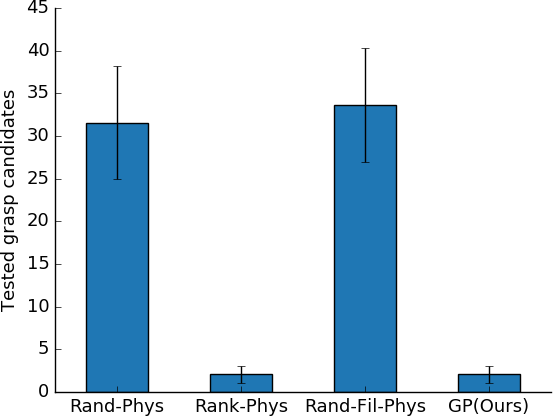}
		\caption{Number of tested grasps.}
		\label{fig:gp_tested_cands}
	\end{subfigure}
	\caption{Experimental results in simulation on multi-object grasp planning. Our approach (GP) found a grasp or reported that none exists in  \textbf{2.2 seconds} on average which is \textbf{19 times faster} than the Mujoco physics simulator baseline (Rand-Phys), thanks to a fast grasp filtering system. GP tests an average of 2 candidate grasps before a successful grasp is found. That is \textbf{16.2 times less} candidate grasps using the ranking system, compared to a random approach (baselines Rand-Phys, and Rand-Fil-Phys). Error bars are shown within 95\% confidence interval of the mean.}
	\label{fig:grasp-planning-results}
\end{figure*}

\subsection{Physical Experiments}
We conduct physical picking experiments to evaluate the picking system presented in this work, in comparison to a single object picking system. Details of this single-object picking system can be found below. 

\noindent\textbf{Single object picking system:}
We use the same multi-object picking algorithm and grasp planner for single object picking. However, we restrict the number of objects to a randomly chosen single object.

\noindent\textbf{Metrics:} We compare both picking systems based on success rate, percentage of objects picked, and picks per hour.  \textit{Success rate}: the percentage of grasp attempts that moved at least one object into the box. \textit{Percent picked}: the fraction of objects that were moved to the box. \textit{Picks per hour (PPH)}: total number of objects picked per hour, through single or multi-object grasps. 

\noindent\textbf{Physical experiment details:}
We use the UR5 robot arm with a Robotiq 2F-85 gripper, and an Astra RGBD camera in all experiments. We create 20 picking scenes where we randomly place 33 objects in different positions and orientations. We clustered objects arbitrarily to create opportunities for multi-object grasps. In each scene, we use both the multi-object picking system and the single-object picking system to generate grasps. We replicated the scene manually between the two picking systems, leading to a total of 40 real-robot runs. Please see Fig.~\ref{fig:mog_intro} for a sample real-world scene. A failed grasp attempt is where the robot misses a grasp (all objects escape), or where all objects fall out of the gripper before they reach the box. 

\noindent\textbf{Results:}
Picking results can be found in Table.~\ref{table:picking_results}. Both systems picked a similar percentage of objects. However, the multi-object grasping system achieved \textbf{13.6\%} higher grasp success. A major source of grasp failure is uncertainty in state and model parameters. This affects success rates of both picking systems. The single-object system suffers more as it makes more grasp attempts aimed at picking single objects that are oftentimes not easily accessible due to clutter. Another source of grasp failure is model mismatch. Grasps were planned in the physics simulator and the underlying assumptions there may not hold on the real physical system. We find that the multi-object grasping system is \textbf{59.9\%} faster than the single object picking baseline in picks per hour (PPH). This is in spite of the fact that uncertainty can sometimes aid the single-object picking system in grasping more than one object at a time. 
\vspace{-8mm}
\begin{center}
\begin{table}[t]
\caption{Physical picking experimental results with 20 scenarios, each with 33 objects, placed in different random positions and orientations.} 
\centering 
\begin{tabular}{@{}l@{\quad}c@{\quad}c@{\quad}c@{}} 
\toprule 
Methods & Success rate (\%) & Percent picked & PPH\\ 
\midrule
Single-Object  & 73.31 $\pm$ 4.84  & 97.77 $\pm$ 1.42 & 212.59 $\pm$ 19.03 
\\
Multi-Object & 83.33 $\pm$ 4.29 & 99.52 $\pm$ 0.48 & 340.08 $\pm$ 27.09 
 \\
\bottomrule
\end{tabular}
\label{table:picking_results}
\end{table}
\end{center}
%
\section{Discussion and future work}
\vspace{-2mm}
We propose the planar frictionless multi-object grasping problem where multiple convex polygonal objects are grasped and transported to a bin. We provide necessary conditions for multi-object push grasping under the frictionless point contact model, and their corresponding theorems and proofs. Experiments in simulation and on the physical robot suggest a significant speed-up in grasp planning time compared to a physics simulator baseline. We also find a reduction in overall picking time, compared to a single-object picking system. 

One assumption in this work is the frictionless point contact model between objects. This limits the number of feasible multi-object grasps. In future work, we will consider friction and how it affects multi-object grasps. We planned grasps in a physics simulator and without considering uncertainty. This lead to grasp failure. We will extend this work to generate robust multi-object grasps and to consider general 3D objects, including deformables. We restricted our study to antipodal multi-object grasps. In future work, we will consider other forms of multi-object grasps, including multi-fingered grasps. 

\section{Acknowledgement}
\vspace{-2mm}
This research was performed at the AUTOLAB at UC Berkeley in affiliation with the Berkeley AI Research (BAIR) Lab, and the CITRIS "People and Robots" (CPAR) Initiative. The authors were supported in part by donations from Siemens. Mehmet Dogar was partially supported by an EPSRC Fellowship (EP/V052659) and Wisdom C. Agboh was supported by EPSRC Doctoral Prize Fellowship Award EP/T517860/1. Any opinions, findings, and conclusions or recommendations
expressed in this material are those of the author(s) and do not necessarily reflect the views of the Sponsors.
\bibliographystyle{unsrtnat}
\bibliography{mog-bib}

\begin{thebibliography}{37}
\providecommand{\natexlab}[1]{#1}
\providecommand{\url}[1]{\texttt{#1}}
\expandafter\ifx\csname urlstyle\endcsname\relax
  \providecommand{\doi}[1]{doi: #1}\else
  \providecommand{\doi}{doi: \begingroup \urlstyle{rm}\Url}\fi

\bibitem[Danielczuk et~al.(2018)Danielczuk, Mahler, Correa, and
  Goldberg]{Danielczuk-CASE-2018}
Michael Danielczuk, Jeffrey Mahler, Chris Correa, and Ken Goldberg.
\newblock \href{https://ieeexplore.ieee.org/document/8560406}{Linear Push
  Policies to Increase Grasp Access for Robot Bin Picking}.
\newblock In \emph{CASE}, 2018.

\bibitem[Mahler and Goldberg(2017)]{mahler2017binpicking}
Jeffrey Mahler and Ken Goldberg.
\newblock Learning deep policies for robot bin picking by simulating robust
  grasping sequences.
\newblock In \emph{CoRL}, volume~78, pages 515--524, 2017.

\bibitem[Matsumura et~al.(2019)Matsumura, Domae, Wan, and
  Harada]{Ryo-IROS-2019}
Ryo Matsumura, Yukiyasu Domae, Weiwei Wan, and Kensuke Harada.
\newblock Learning based robotic bin-picking for potentially tangled objects.
\newblock In \emph{IROS}, 2019.

\bibitem[Huang et~al.(2021)Huang, Dominguez-Kuhne, Satish, Danielczuk, Sanders,
  Ichnowski, Lee, Angelova, Vanhoucke, and Goldberg]{Huang-IROS-2021}
Huang Huang, Marcus Dominguez-Kuhne, Vishal Satish, Michael Danielczuk, Kate
  Sanders, Jeff Ichnowski, Andrew Lee, Anelia Angelova, Vincent~Olivier
  Vanhoucke, and Ken Goldberg.
\newblock \href{https://arxiv.org/abs/2011.11696}{Mechanical Search on Shelves
  using LAX-RAY: Lateral Access X-RAY}.
\newblock In \emph{IROS}, 2021.

\bibitem[Ichnowski et~al.(2020{\natexlab{a}})Ichnowski, Danielczuk, Xu, Satish,
  and Goldberg]{GOMP}
Jeffrey Ichnowski, Michael Danielczuk, Jingyi Xu, Vishal Satish, and Ken
  Goldberg.
\newblock Gomp: Grasp-optimized motion planning for bin picking.
\newblock In \emph{ICRA}, 2020{\natexlab{a}}.

\bibitem[Ichnowski et~al.(2020{\natexlab{b}})Ichnowski, Avigal, Satish, and
  Goldberg]{DJ-GOMP}
Jeffrey Ichnowski, Yahav Avigal, Vishal Satish, and Ken Goldberg.
\newblock Deep learning can accelerate grasp-optimized motion planning.
\newblock \emph{Science Robotics}, 5\penalty0 (48), 2020{\natexlab{b}}.

\bibitem[Yamada et~al.(2009)Yamada, Yamanaka, Yamada, Funahashi, and
  Yamamoto]{Yamada-ROBIO-2009}
Takayoshi Yamada, Shuichi Yamanaka, Manabu Yamada, Yasuyuki Funahashi, and
  Hidehiko Yamamoto.
\newblock Grasp stability analysis of multiple planar objects.
\newblock In \emph{ROBIO}, 2009.

\bibitem[Yamada et~al.(2012)Yamada, Yamada, and Yamamoto]{Yamada-ICMA-2012}
Takayoshi Yamada, Manabu Yamada, and Hidehiko Yamamoto.
\newblock Stability analysis of multiple objects grasped by multifingered hands
  with revolute joints in 2d.
\newblock In \emph{IEEE International Conference on Mechatronics and
  Automation}, 2012.

\bibitem[Harada and Kaneko(1998{\natexlab{a}})]{Harada-IROS-1998}
K.~Harada and M.~Kaneko.
\newblock Kinematics and internal force in grasping multiple objects.
\newblock In \emph{IROS}, 1998{\natexlab{a}}.

\bibitem[Sakamoto et~al.(2021)Sakamoto, Wan, Nishi, and
  Harada]{Sakamoto-IROS-2021}
Takumi Sakamoto, Weiwei Wan, Takao Nishi, and Kensuke Harada.
\newblock Efficient picking by considering simultaneous two-object grasping.
\newblock In \emph{IROS}, 2021.

\bibitem[Agboh et~al.(2019)Agboh, Ruprecht, and Dogar]{Agboh-ISRR-2019}
Wisdom~C. Agboh, Daniel Ruprecht, and Mehmet.~R. Dogar.
\newblock Combining coarse and fine physics for manipulation using
  parallel-in-time integration.
\newblock \emph{ISRR}, 2019.

\bibitem[Agboh et~al.(2020)Agboh, Ruprecht, and Dogar]{Agboh-CVS-2020}
Wisdom~C. Agboh, Daniel Ruprecht, and Mehmet~R. Dogar.
\newblock Parareal with a learned coarse model for robotic manipulation.
\newblock \emph{Comput. Visual Sci.}, 23\penalty0 (8), 2020.

\bibitem[Hasan et~al.(2020)Hasan, Warburton, Agboh, Dogar, Leonetti, Wang,
  Mushtaq, Mon-Williams, and Cohn]{Mohammed-ICRA-2020}
Mohamed Hasan, Matthew Warburton, Wisdom~C. Agboh, Mehmet~R. Dogar, Matteo
  Leonetti, He~Wang, Faisal Mushtaq, Mark Mon-Williams, and Anthony~G. Cohn.
\newblock Human-like planning for reaching in cluttered environments.
\newblock In \emph{ICRA}, 2020.

\bibitem[Morrison et~al.(2020)Morrison, Corke, and Leitner]{Morrison-IJRR-2020}
Douglas Morrison, Peter Corke, and Jürgen Leitner.
\newblock Learning robust, real-time, reactive robotic grasping.
\newblock \emph{IJRR}, 39\penalty0 (2-3):\penalty0 183--201, 2020.

\bibitem[Lou et~al.(2021)Lou, Yang, and Choi]{Lou-ICRA-2021}
Xibai Lou, Yang Yang, and Changhyun Choi.
\newblock Collision-aware target-driven object grasping in constrained
  environments.
\newblock In \emph{ICRA}, 2021.

\bibitem[Prattichizzo and Trinkle(2008)]{Prattichizzo-Handbook-2008}
Domenico Prattichizzo and Jeffrey~C. Trinkle.
\newblock Grasping.
\newblock In \emph{Springer Handbook of Robotics}, pages 671--700, 2008.

\bibitem[Rodriguez et~al.(2012)Rodriguez, Mason, and
  Ferry]{Rodriguez-IJRR-2012}
Alberto Rodriguez, Matthew~T Mason, and Steve Ferry.
\newblock From caging to grasping.
\newblock \emph{IJRR}, 31\penalty0 (7):\penalty0 886--900, 2012.

\bibitem[Kehoe et~al.(2013)Kehoe, Matsukawa, Candido, Kuffner, and
  Goldberg]{Kehoe-ICRA-2013}
Ben Kehoe, Akihiro Matsukawa, Sal Candido, James Kuffner, and Ken Goldberg.
\newblock Cloud-based robot grasping with the google object recognition engine.
\newblock In \emph{ICRA}, 2013.

\bibitem[Bohg et~al.(2014)Bohg, Morales, Asfour, and Kragic]{Bohg-TRO-2014}
Jeannette Bohg, Antonio Morales, Tamim Asfour, and Danica Kragic.
\newblock Data-driven grasp synthesis—a survey.
\newblock \emph{IEEE Transactions on Robotics}, 30\penalty0 (2):\penalty0
  289--309, 2014.

\bibitem[Goldfeder and Allen(2011)]{Goldfeder-AutonRobot-2011}
Corey Goldfeder and Peter~K. Allen.
\newblock Data-driven grasping.
\newblock \emph{Autonomous Robots}, 31:\penalty0 1--20, 2011.

\bibitem[Mahler et~al.(2016)Mahler, Pokorny, Hou, Roderick, Laskey, Aubry,
  Kohlhoff, Kr{\"o}ger, Kuffner, and Goldberg]{mahler2016dex}
Jeffrey Mahler, Florian~T Pokorny, Brian Hou, Melrose Roderick, Michael Laskey,
  Mathieu Aubry, Kai Kohlhoff, Torsten Kr{\"o}ger, James Kuffner, and Ken
  Goldberg.
\newblock Dex-net 1.0: A cloud-based network of 3d objects for robust grasp
  planning using a multi-armed bandit model with correlated rewards.
\newblock In \emph{ICRA}, 2016.

\bibitem[Pauly et~al.(2021)Pauly, Agboh, Hogg, and
  Fuentes]{Pauly-Frontiers-2021}
Leo Pauly, Wisdom~C Agboh, David~C Hogg, and Raul Fuentes.
\newblock O2a: One-shot observational learning with action vectors.
\newblock \emph{Frontiers in Robotics and AI}, 8, 2021.

\bibitem[Bejjani et~al.(2021)Bejjani, Agboh, Dogar, and
  Leonetti]{Bejjani-IROS-2021}
Wissam Bejjani, Wisdom~C. Agboh, Mehmet~R. Dogar, and Matteo Leonetti.
\newblock Occlusion-aware search for object retrieval in clutter.
\newblock In \emph{IEEE IROS}, 2021.

\bibitem[Harada and Kaneko(1998{\natexlab{b}})]{Harada-ICRA-1998}
K.~Harada and M.~Kaneko.
\newblock Enveloping grasp for multiple objects.
\newblock In \emph{ICRA}, 1998{\natexlab{b}}.

\bibitem[Harada et~al.(2000{\natexlab{a}})Harada, Kaneko, and
  Tsujii]{Harada-TRA-2000}
K.~Harada, M.~Kaneko, and T.~Tsujii.
\newblock Rolling-based manipulation for multiple objects.
\newblock \emph{IEEE Transactions on Robotics and Automation}, 16\penalty0
  (5):\penalty0 457--468, 2000{\natexlab{a}}.

\bibitem[Harada et~al.(2000{\natexlab{b}})Harada, Kaneko, and
  Tsuji]{Harada-ARK-2000}
K.~Harada, M.~Kaneko, and T.~Tsuji.
\newblock Active force closure for multiple objects.
\newblock In \emph{Advances in Robot Kinematics}, pages 155--164,
  2000{\natexlab{b}}.

\bibitem[Yoshikawa et~al.(2001)Yoshikawa, Watanabe, and
  Daito]{Yoshikawa-ICRA-2001}
T.~Yoshikawa, T.~Watanabe, and M.~Daito.
\newblock Optimization of power grasps for multiple objects.
\newblock In \emph{ICRA}, 2001.

\bibitem[Yamada et~al.(2005{\natexlab{a}})Yamada, Ooba, Yamamoto, Mimura, and
  Funahashi]{Yamada-ICRA-2005}
T.~Yamada, T.~Ooba, T.~Yamamoto, N.~Mimura, and Y.~Funahashi.
\newblock Grasp stability analysis of two objects in two dimensions.
\newblock In \emph{ICRA}, 2005{\natexlab{a}}.

\bibitem[Yamada et~al.(2005{\natexlab{b}})Yamada, Mimura, and
  Funahashi]{Yamada-ISMNHS-2005}
T.~Yamada, N.~Mimura, and Y.~Funahashi.
\newblock Grasp stability analysis of two objects with both friction and
  frictionless contacts in two dimensions.
\newblock In \emph{IEEE International Symposium on Micro-NanoMechatronics and
  Human Science}, 2005{\natexlab{b}}.

\bibitem[Yamada and Yamamoto(2015)]{Yamada-JCSE-2015}
Takayoshi Yamada and Hidehiko Yamamoto.
\newblock Static grasp stability analysis of multiple spatial objects.
\newblock \emph{Journal of Control Science and Engineering}, 3:\penalty0
  118--139, 2015.

\bibitem[Chen et~al.(2021)Chen, Shenoy, Kolinko, Shah, and Sun]{Chen-IROS-2021}
Tianze Chen, Adheesh Shenoy, Anzhelika Kolinko, Syed Shah, and Yu~Sun.
\newblock Multi-object grasping - estimating the number of objects in a robotic
  grasp.
\newblock In \emph{IROS}, 2021.

\bibitem[Shenoy et~al.(2021)Shenoy, Chen, and Sun]{Shenoy-CoRR-2021}
Adheesh Shenoy, Tianze Chen, and Yu~Sun.
\newblock Multi-object grasping - generating efficient robotic picking and
  transferring policy.
\newblock \emph{CoRR}, 2021.

\bibitem[Dogar and S.~Srinivasa(2011)]{Dogar-RSS-2011}
Mehmet Dogar and Siddhartha S.~Srinivasa.
\newblock A framework for push-grasping in clutter.
\newblock In \emph{RSS}, 2011.

\bibitem[Agboh and Dogar(2018{\natexlab{a}})]{Agboh-Humanoids-2018}
Wisdom~C. Agboh and Mehmet~R. Dogar.
\newblock Real-time online re-planning for grasping under clutter and
  uncertainty.
\newblock In \emph{IEEE Humanoids}, pages 1--8, 2018{\natexlab{a}}.

\bibitem[Agboh and Dogar(2018{\natexlab{b}})]{Agboh-WAFR-2018}
Wisdom~C. Agboh and Mehmet~R. Dogar.
\newblock Pushing fast and slow: Task-adaptive planning for non-prehensile
  manipulation under uncertainty.
\newblock In \emph{WAFR}, 2018{\natexlab{b}}.

\bibitem[Agboh and Dogar(2021)]{Agboh-arxiv-2021}
Wisdom~C. Agboh and Mehmet~Remzi Dogar.
\newblock Robust physics-based manipulation by interleaving open and
  closed-loop execution.
\newblock \emph{CoRR}, abs/2105.08325, 2021.

\bibitem[Goldberg(1993)]{Goldberg-Algorithmica-1993}
Kenneth~Y. Goldberg.
\newblock Orienting polygonal parts without sensors.
\newblock \emph{Algorithmica}, 10:\penalty0 201--225, 1993.

\end{thebibliography}

\end{document}